\documentclass{article} 
\usepackage{iclr2027_conference,times}

\usepackage{amsmath,amssymb,amsfonts}
\usepackage{amsthm}
\usepackage{bm}

\usepackage{graphicx}
\usepackage{booktabs}
\usepackage{multirow}
\usepackage{array}
\usepackage{makecell}
\usepackage{xcolor}
\usepackage{tikz}
\usetikzlibrary{arrows.meta,positioning,calc}

\usepackage{algorithm}
\usepackage{algpseudocode}

\usepackage{placeins}
\usepackage{microtype}
\usepackage{hyperref}
\hypersetup{
  pdftitle={GraphVQ: Structure-Aware Autoregressive Decoding over Context-Quantized Graph Tokens},
  pdfauthor={Yuxiang Yao, Zijun Zhao},
  pdfsubject={Machine Learning (cs.LG)},
  pdfkeywords={graph generation, vector quantization, autoregressive models, graph tokenization, graph foundation models}
}
\usepackage{url}

\newcommand{\method}{\textsc{GraphVQ}}

\newcommand{\R}{\mathbb{R}}
\newcommand{\od}{\odot}
\DeclareMathOperator*{\argmin}{arg\,min}
\newtheorem{proposition}{Proposition}

\title{GraphVQ: Structure-Aware Autoregressive Decoding over Context-Quantized Graph Tokens}

\author{Yuxiang Yao$^{\dagger}$ \\
Research Center, \\ 
China Life Insurance Company Ltd.\\
\texttt{yaoyuxiangyyx2023@e-chinalife.com}
\And
Zijun Zhao$^{\dagger*}$ \\
School of Computer Science and Technology\\
Beijing Institute of Technology\\
\texttt{zhaozijun@bit.edu.cn}
}

\iclrfinalcopy 

\begin{document}

\maketitle
\lhead{Preprint} 
\begingroup
\renewcommand\thefootnote{}\footnotetext{$^{\dagger}$Equal contribution. $^{*}$Corresponding author.}%
\addtocounter{footnote}{-1}%
\endgroup

\begin{abstract}
Graph foundation models need a discrete token representation, but casting a graph as a generatable token sequence faces a structural obstacle: edges spanning beyond the serialization window cannot be emitted in one pass---so one-pass autoregressive generators systematically under-produce cycles---and a single global condition cannot tell candidate edges apart. \method{} removes both obstacles: node contexts---features plus a local edge mask under multi-order breadth-first serialization---are quantized into a shared codebook by a VQ-VAE with BCE-calibrated Bernoulli edge decoding, and a second-stage \emph{structure-aware} decoder emits the global adjacency conditioned on token-derived pair features, whose necessity over any global-summary condition is formalized in a scoped impossibility result. The tokenizer reconstructs node features at $0.86$--$0.99$ accuracy and decodes local edges at AUROC $\geq\!0.89$ (ECE $\leq\!0.007$). Under one same-split protocol on four datasets, pair conditioning improves orbit MMD $0.248\!\to\!0.174$ on PROTEINS and $3.4\times$ on a ring stress test, and vanishes on a random-label control---the signature of attribute--topology coupling---so the gain is claimed exactly where attributes carry edge-relevant signal. \method{} ranks first among learned generators on PROTEINS, ties for first on SYN-COMM, and improves orbit MMD $2.7$--$17\times$ over one-stage generation on three datasets, with seed-level bootstrap intervals confirming the rankings are not seed noise; on MUTAG the unweighted edge target under-generates and is reported as such. These results locate the structural control of autoregressive graph generation in the granularity of the condition: pair-level token context turns a quantized vocabulary into a usable capacity axis for distribution-faithful graph generation and future token-level pretraining.
\end{abstract}

\section{Introduction}
\label{sec:intro}

Graphs describe molecules, proteins, social and citation networks, and knowledge bases, and graph neural networks are the standard tool for supervised learning over them~\cite{xu2019gin,kipf2017gcn,hamilton2017sage}. Yet the field has not produced the analogue of a language or vision foundation model: a single pretrained model that transfers across graphs of different sizes, feature spaces, and domains~\cite{wang2024gft,xia2024opengraph}. A core obstacle is representational. Text and images are mapped to discrete token sequences before a shared autoregressive or masked model is trained at scale~\cite{kaplan2020scaling,hoffmann2022chinchilla}; graphs resist this recipe because nodes are not naturally ordered, node features vary in dimensionality, and topologies differ across datasets. Building a discrete, transferable graph vocabulary---a graph tokenizer---is therefore a central problem for graph foundation models~\cite{guo2026graphtoken}.

Vector quantization (VQ) is the natural candidate: encode local graph contexts and quantize them into a finite shared codebook, following VQ-VAE~\cite{oord2017vqvae} and the graph-specific designs GQT~\cite{wang2025gqt} and GFT~\cite{wang2024gft}. Discrete tokens bring a finite alphabet, cross-entropy training, and a uniform token-level interface---the ingredients of token-level pretraining. We do not claim discretization is the only route to autoregressive generation; we claim something testable: the discrete vocabulary earns its keep inside the generative pipeline, verified against controls that vary the token representation (Sec.~\ref{sec:exp}).

The second ingredient is structural. Under any fixed serialization, an edge whose endpoints lie more than $W$ positions apart cannot be emitted in one pass, and a cycle needs every one of its edges---so the residual long-span edges that BFS leaves uncovered break cycles and motifs, and one-pass generators systematically under-produce cycles (Fig.~\ref{fig:windowcov}, Appendix~\ref{app:figures}). A two-stage scheme---sample node tokens first, then decode the global adjacency---removes the window limit, but only if the edge phase is conditioned on information that actually predicts edges. A single global summary such as the mean node feature is provably insufficient \emph{when it carries no endpoint information} (Proposition~\ref{prop:global}, scoped accordingly). We verify this directly: enriching the condition from the mean feature to token-derived pair features improves orbit MMD from $0.248$ to $0.174$ on PROTEINS (paired $t{=}{-}6.1$) and by $3.4\times$ on a synthetic ring stress test, while on a random-label control---which removes attribute--topology coupling but keeps the local adjacency context---the gain vanishes, exactly as the attribute-coupling hypothesis predicts.

We present \method{}, a minimal pipeline that combines both ideas. Nodes are serialized in breadth-first (BFS) order with random roots and tie-breaking; each node context is quantized into a shared codebook; a two-layer GRU prior factorizes the token sequence; and generation is two-stage, with Stage~B an autoregressive edge decoder conditioned on token-derived pair features over the blocked upper-triangular adjacency. Our contributions are:
\begin{itemize}
  \item \textbf{Structure-aware autoregressive decoding over token-derived pair contexts.} Stage~B builds per-pair features from token embeddings, their interaction, position, and a graph-level summary, pools them per chunk, and emits chunked bits as masked independent Bernoullis with positional readouts. Replacing the global mean condition with pair features improves orbit MMD $0.248\!\to\!0.174$ on PROTEINS ($t{=}{-}6.1$) and $3.4\times$ on a ring stress test, while the gain disappears on a random-label control---the signature of attribute--topology coupling.
  \item \textbf{A calibrated context-quantized graph tokenizer.} BCE-calibrated Bernoulli edge decoding reaches AUROC $\geq\!0.95$ and ECE $\leq\!0.005$ on real data, and removing the local-structure context collapses generation (orbit MMD $0.59$--$1.17$): the quantized context, not the attributes alone, carries the structural signal.
  \item \textbf{Competitive end-to-end generation under a same-split protocol.} Against seven baselines and two controls on four datasets (five seeds, $10{,}000$ samples, pre-registered mean-rank over ten methods), \method{} ranks first among learned generators on PROTEINS, ties Raw for first on SYN-COMM, and beats the continuous-latent control on three of four datasets.
  \item \textbf{Separate conditional-reconstruction and free-generation evaluation, and order-robust serialization.} Conditional Stage~B metrics under true conditions are reported apart from free-generation distribution metrics, the condition gap ($0.005$--$1.68$ nats/bit) is reported without over-reading, and shuffled-condition plus unconditional controls locate where the condition earns its gain. Multi-order BFS augmentation lowers held-out sequence NLL from $1.44$ to $1.10$ on MUTAG.
\end{itemize}

\section{Related Work}
\label{sec:related}

\paragraph{Graph tokenization and graph foundation models.} Recent work shares the premise that graphs must be expressed as tokens before foundation-model training: GFT~\cite{wang2024gft} builds a transferable tree vocabulary for pretraining and prompting; OpenGraph~\cite{xia2024opengraph} learns a continuous, topology-aware projection for zero-shot transfer; Tokenphormer~\cite{zhou2025tokenphormer} encodes multi-token structure for node classification; GQT~\cite{wang2025gqt} tokenizes PPR-based node contexts with residual VQ and drives molecular generation with a GPT-style prior. Closest in spirit, Guo and Diao~\cite{guo2026graphtoken} connect graphs to transformers through \emph{reversible} serializations and BPE-style token vocabularies. Prior work targets transfer, molecule generation with heavy transformer priors, or tokenization as a forward interface; none addresses how the \emph{decoding} side should exploit the tokens. \method{} contributes exactly this: a token-conditioned autoregressive decoder with pair-level features, plus controls that isolate the value of discretization itself, so the token argument rests on measurements rather than assertion.

\paragraph{Autoregressive graph generation.} GraphRNN~\cite{you2018graphrnn} pioneered autoregressive graph generation with a BFS ordering and an edge-level RNN; GRAN~\cite{liao2019gran} generates the adjacency in blocks with attention; GraphARM~\cite{kong2023grapharm} interleaves autoregression with absorbing-state diffusion. One-shot models decode whole graphs from a single latent~\cite{simonovsky2018graphvae,decao2018molgan,martinkus2022spectre}; the diffusion family spans discrete~\cite{vignac2023digress}, continuous score-based~\cite{jo2022gdss}, and mixture~\cite{jo2024grum} variants. A shared weakness of row- or edge-level autoregression is that cycles need long-range back-edges, so one-pass schemes systematically under-generate them; \method{} addresses this structurally with a second stage that decodes all candidate pairs conditioned on the generated token sequence.

\paragraph{Vector quantization and vocabulary capacity.} VQ-VAE~\cite{oord2017vqvae} introduced discrete latent codes with commitment loss and EMA codebook updates; VQ-VAE-2~\cite{razavi2019vqvae2} scaled it hierarchically, and VQGAN~\cite{esser2021vqgan} added adversarial decoding. Known failure modes include codebook collapse and low utilization, usually monitored rather than modeled. \method{} inherits this machinery and adds a graph-specific lesson: BCE plus Bernoulli sampling aligns edge density with the data prior by construction, and utilization statistics (active codes, perplexity, frequency Gini) act as first-class tokenizer diagnostics. Scaling laws for language~\cite{kaplan2020scaling,hoffmann2022chinchilla} and GNNs~\cite{liu2024scaling} leave vocabulary size under-studied; our diagnostics show the vocabulary is used as far as the data demands---a capacity-truncation signature making utilization, not nominal size, the effective bound.

\section{Methodology}
\label{sec:method}

\subsection{Overview and Notation}
\label{sec:overview}

Fig.~\ref{fig:framework} shows \method{}. Given an attributed graph $G=(V,E,X)$ with $N{=}|V|$ nodes and feature matrix $X\!\in\!\R^{N\times d}$, Stage~1 serializes the graph in BFS order and tokenizes each node context into a discrete code from a shared codebook (a VQ-VAE). Stage~2 trains a GRU prior over the token sequence. Stage~3 generates in two phases: node tokens first (Stage~A), then the global adjacency through a per-pair conditioned autoregressive edge decoder (Stage~B). Table~\ref{tab:notation} in Appendix~\ref{app:notation} summarizes the notation.

\begin{figure}[!htb]
\centering
\includegraphics[width=0.78\linewidth]{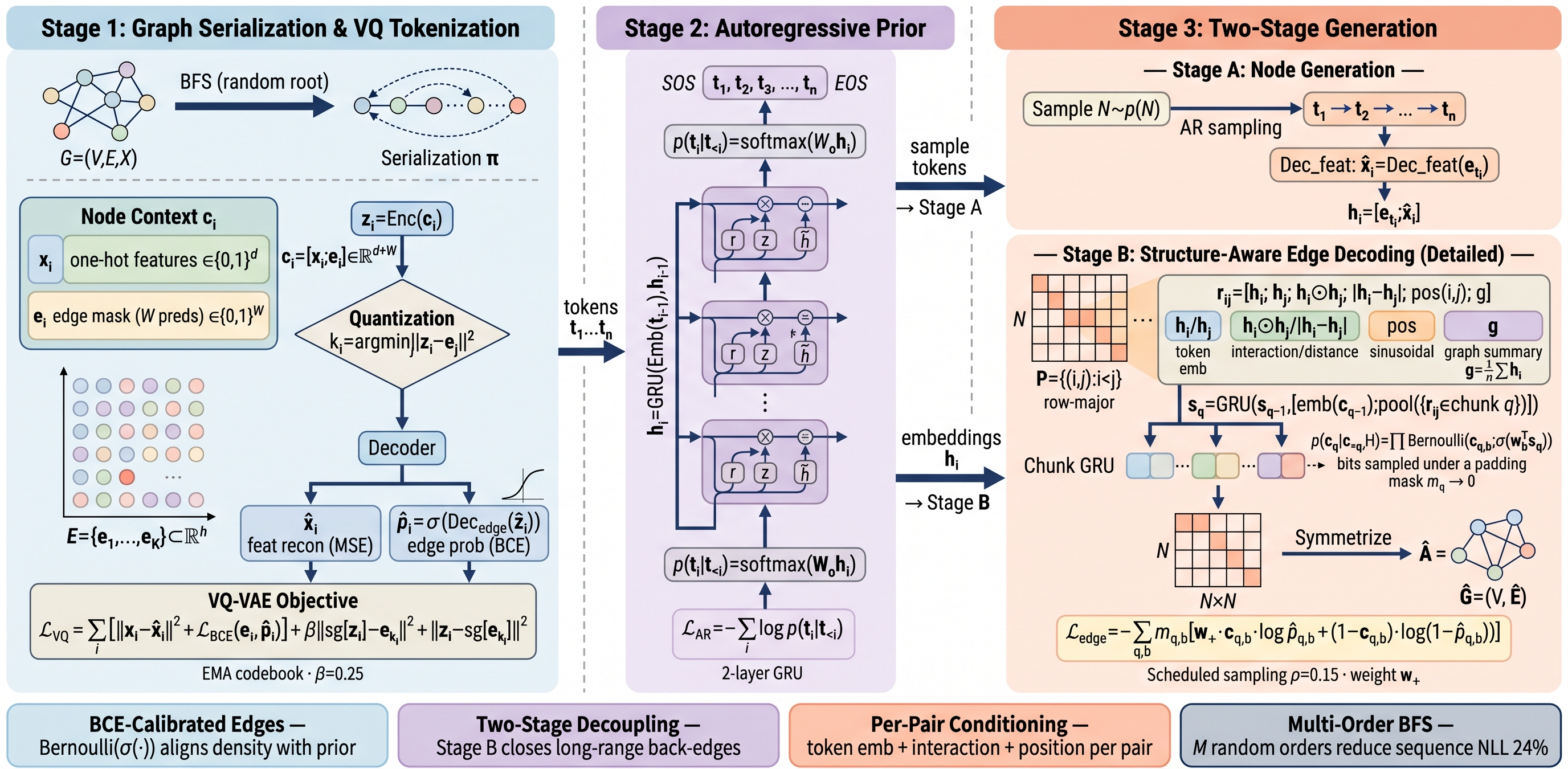}
\caption{The three stages of \method{}. \textbf{Stage 1} (training): the graph is serialized in BFS order (random roots/tie-breaking, multi-order augmentation); each node context $c_i$---one-hot features plus an edge mask over the preceding $W$ nodes---is encoded, quantized to the nearest codebook entry (the discrete token), and decoded into features and BCE-calibrated edge probabilities. \textbf{Stage 2} (training): a GRU prior factorizes the token sequence. \textbf{Stage 3} (inference): Stage~A samples node tokens and decodes features; Stage~B builds per-pair features $r_{ij}$, chunks the upper triangle into $B$-bit groups, and autoregressively emits each chunk from a GRU whose input pools the chunk's pair features; padding bits are forced to zero and the symmetrized matrix is the generated graph. Solid arrows: forward flow; dashed: training objectives.}
\label{fig:framework}
\end{figure}

\subsection{Graph Serialization and VQ Tokenization}
\label{sec:tokenizer}

We serialize a graph into a sequence of \emph{node contexts}. Nodes are ordered by BFS, which places topologically close nodes near each other and keeps most edges local---the observation that motivated GraphRNN~\cite{you2018graphrnn}. Because BFS depends on the root and same-level tie-breaking, the serialization is not permutation-invariant; we treat this explicitly by drawing a fresh random root and tie-breaking during training, with multi-order augmentation ($M$ orders per graph), and aggregating held-out likelihoods over eight orders at evaluation (Sec.~\ref{sec:exp}). The model is therefore permutation-aware \emph{through augmentation}, not invariant.

The context of the $i$-th node in the order concatenates its one-hot features $x_i\!\in\{0,1\}^{d}$ with a binary edge mask $e_i\!\in\{0,1\}^{W}$ over the $W$ preceding nodes:
\begin{equation}
c_i = [x_i;\,e_i] \in \R^{d+W},
\qquad e_{i,w}=\mathbb{1}\big[(v_{\pi(i)},v_{\pi(i-w)})\in E\big],
\label{eq:context}
\end{equation}
where $\pi$ is the order and entries beyond the window are zero-padded. $W$ trades locality against reach: an edge spanning more than $W$ positions cannot be emitted in a single pass---expressibility is set by the \emph{edge span} induced by the order. BFS keeps most edges local while leaving a residual of long-span edges that break cycles and motifs: the structural gap that Stage~B (Sec.~\ref{sec:generation}) closes. Removing the edge mask entirely (``attribute-only'' context, Sec.~\ref{sec:exp}) collapses generation (orbit MMD $0.59$--$1.17$, connectivity $\approx\!0$): the local-structure context, not the attributes alone, carries the tokenizer's structural information.

The context is encoded and quantized against a shared codebook $E=\{e_1,\dots,e_K\}\subset\R^{h}$:
\begin{equation}
z_i=\mathrm{Enc}(c_i),\qquad
k_i=\argmin_{j}\big\lVert z_i-e_j\big\rVert^2,\qquad
\hat z_i=e_{k_i}.
\label{eq:quantize}
\end{equation}
The decoder reconstructs the features and models edges \emph{explicitly} as calibrated probabilities $\hat p_i\!\in\!(0,1)^{W}$:
\begin{equation}
[\hat x_i;\,\hat p_i]=\mathrm{Dec}(\hat z_i),
\qquad \hat p_i=\sigma\big(\mathrm{Dec}_{\mathrm{edge}}(\hat z_i)\big).
\label{eq:decode}
\end{equation}
The tokenizer is trained as a VQ-VAE with feature reconstruction (MSE), explicit edge modeling (BCE), and the standard codebook-update and commitment terms ($\mathrm{sg}$ denotes stop-gradient):
\begin{equation}
\begin{aligned}
\mathcal{L}_{\mathrm{VQ}}
= \sum_{i}\Big[&\big\lVert x_i-\hat x_i\big\rVert^2
+\mathcal{L}_{\mathrm{BCE}}(e_i,\hat p_i)\Big]\\
&+\beta\,\big\lVert\mathrm{sg}[z_i]-e_{k_i}\big\rVert^2
+\big\lVert z_i-\mathrm{sg}[e_{k_i}]\big\rVert^2.
\end{aligned}
\label{eq:vqloss}
\end{equation}
Codebook entries are updated by EMA, and utilization (active codes, perplexity, frequency Gini) is monitored to detect collapse. Modeling edges with \emph{unweighted} BCE is decisive for calibration: the trained probability already encodes the empirical edge density, so drawing edges from $\mathrm{Bernoulli}(\hat p_i)$ aligns the generated density with the data prior (Table~\ref{tab:tokenizer}, Appendix~\ref{app:stats}: Brier $\leq\!0.065$, ECE $\leq\!0.005$). The same claim does \emph{not} transfer to the weighted Stage~B loss, whose optimum is deliberately biased (Sec.~\ref{sec:generation}).

\subsection{Autoregressive Prior}
\label{sec:prior}

The tokenizer maps each graph to a discrete sequence $t_1,\dots,t_n$ of code indices. We train a GRU prior that factorizes this sequence autoregressively, wrapped by SOS and EOS:
\begin{equation}
\begin{aligned}
p(t_1,\dots,t_n)&=\textstyle\prod_{i=1}^{n} p(t_i\mid t_{<i}),\\
p(t_i\mid t_{<i})&=\mathrm{softmax}(W_o h_i),
\end{aligned}
\label{eq:prior}
\end{equation}
\begin{equation}
\begin{aligned}
h_i&=\mathrm{GRU}\big(\mathrm{Emb}(t_{i-1}),\,h_{i-1}\big),\\
\mathcal{L}_{\mathrm{AR}}&=-\textstyle\sum_{i}\log p(t_i\mid t_{<i}).
\end{aligned}
\label{eq:ar}
\end{equation}
A two-layer GRU suffices and keeps the prior deliberately minimal; the architecture is a drop-in slot that larger transformer priors can occupy without changing the tokenizer or the generation scheme.

\subsection{Two-Stage Generation with Per-Pair Conditioning}
\label{sec:generation}

Naive generation samples one token sequence and decodes it, but the window $W$ caps each node's reach: longer-span edges are structurally unreachable in one pass, and every missing edge breaks the cycles and motifs that contain it. We therefore generate in two phases, with the second phase conditioned on token-derived pair information from the full token sequence.

\emph{Stage A (nodes).} Sample the graph size $N$ from the training distribution, then autoregressively sample tokens and decode them into one-hot features:
\begin{equation}
t_1,\dots,t_N \sim p(t_i\mid t_{<i}),
\qquad
\hat x_i=\mathrm{Dec}_{\mathrm{feat}}(e_{t_i}),
\label{eq:stagea}
\end{equation}
Each node carries a token embedding $h_i=e_{t_i}$, and the pair representations below concatenate it with the decoded feature $\hat x_i$, so Stage~B conditions on both the quantized representation and its deterministic decoding.

\emph{Stage B (edges).} Consider the candidate pair set $\mathcal{P}=\{(i,j):1\!\leq\!i\!<\!j\!\leq\!N\}$ in row-major order. For each pair we build a feature vector
\begin{equation}
r_{ij}=\big[\,h_i;\,h_j;\,h_i\od h_j;\,|h_i-h_j|;\,\mathrm{pos}(i,j);\,g\,\big],
\label{eq:pairfeat}
\end{equation}
where $\od$ is element-wise product, $\mathrm{pos}(i,j)$ are normalized sinusoidal positional features of the pair, $g=\frac1N\sum_i h_i$ is a graph-level summary, and $h_i$ here denotes the concatenation $[e_{t_i};\,\hat x_i]$. The $N(N{-}1)/2$ pairs are grouped into chunks of $B$ consecutive bits, with the final chunk zero-padded and its padding positions recorded in a mask $m_q\!\in\!\{0,1\}^{B}$ (padding bits carry zero features and mask value $0$). A chunk-level GRU aggregates each chunk's pair features into its input (the $B$ bits of a chunk therefore share one pooled summary and are distinguished by their positional readout heads $w_b$; whether this chunk-pooled form retains per-pair position information is tested in Sec.~\ref{sec:exp}):
\begin{equation}
s_q=\mathrm{GRU}\Big(s_{q-1},\ \big[\mathrm{emb}(c_{q-1});\ \mathrm{pool}(\{r_{ij}:(i,j)\!\in\!q\})\big]\Big),
\label{eq:stageb-gru}
\end{equation}
\begin{equation}
p(c_q\mid c_{<q},\mathcal{H})=\prod_{b=1}^{B}\mathrm{Bernoulli}\big(c_{q,b};\ \sigma(w_b^{\top} s_q)\big),
\label{eq:stageb}
\end{equation}
where $\mathrm{pool}$ is the masked mean of the projected pair features in chunk $q$, $\mathrm{emb}(c_{q-1})$ embeds the previous chunk's bits (a learned start vector at $q{=}1$), and $\mathcal{H}=\{h_i\}_{i=1}^{N}$ is the full token-derived conditioning. The optional positive-class weight $w_{+}$ \emph{re-targets} density rather than calibrating probability: the weighted-BCE optimum is $q^{*}=wp/(wp{+}1{-}p)$, so with $w_{+}{=}3$ a true edge probability of $0.1$ is fit by $0.25$---a deliberate bias toward the positive class (ablated in Table~\ref{tab:cond}, Appendix~\ref{app:ablations}; Fig.~\ref{fig:probcheck}, Appendix~\ref{app:figures}, verifies this on synthetic Bernoulli data, including the sampling rate). Weighted outputs are therefore \emph{not} calibrated probabilities; they are reported as density-matched samples, and the main configuration uses the unweighted target with validation-set temperature calibration (Sec.~\ref{sec:exp}).
Bits are sampled independently within a chunk; a within-chunk autoregressive head is ablated in Sec.~\ref{sec:exp}. The training loss is a masked binary cross-entropy over all chunks, with the optional positive-class weighting $w_{+}$ defined above:
\begin{equation}
\begin{aligned}
\mathcal{L}_{\mathrm{edge}}=-\sum_{q,b} m_{q,b}\big[&w_{+}\,c_{q,b}\log\hat p_{q,b}\\
&+(1{-}c_{q,b})\log(1{-}\hat p_{q,b})\big].
\end{aligned}
\label{eq:edgeloss}
\end{equation}

Two further design points make the model robust to train--sample mismatch. (i)~\emph{Scheduled sampling:} during training, each node token embedding is replaced by a random codebook entry with probability $\rho$ (we use $\rho{=}0.15$), so the edge model observes the kind of errors Stage~A will make. (ii)~\emph{Condition diagnostics:} we evaluate the edge NLL of held-out graphs under three conditions---\emph{oracle} (tokens of the true features), \emph{reconstructed} (deterministic Stage-A reconstruction), and \emph{generated} (one prior sample). The oracle-to-generated gap is $0.005$/$0.009$ nats/bit on MUTAG/PROTEINS and larger on the synthetic benchmarks ($0.10$/$1.68$ on SYN-COMM/SYN-RING-ROLE), localizing the residual to Stage~A (Sec.~\ref{sec:exp}); it is not read as certification---a condition-ignoring decoder would show the same near-zero gap---so condition dependence is tested directly with shuffled-condition and unconditional controls (Sec.~\ref{sec:exp}). Algorithm~\ref{alg:gen} states the procedure; candidate pairs are constructed from the index set $\mathcal{P}$, and at no point does inference read from an adjacency matrix that does not yet exist.

The pair-feature design is not a luxury: a purely global condition provably cannot do the job \emph{within its scope}, as follows.

\begin{proposition}[Insufficiency of a global-summary condition]
\label{prop:global}
Let $s(X)$ be any permutation-invariant summary of the node attributes \emph{alone} (e.g., the mean feature $\bar c$), and let $q(A_{ij}\!\mid\! s(X))$ be an edge model whose condition is $s(X)$ only---no endpoint-specific, positional, or structural inputs. Suppose two graphs $G_1,G_2$ on $N$ nodes share the same attribute multiset, $\{x_i^{(1)}\}\!=\!\{x_i^{(2)}\}$, but differ in attribute--topology coupling, $p_{G_1}(A_{ij}{=}1\!\mid\! x_i,x_j)\neq p_{G_2}(A_{ij}{=}1\!\mid\! x_i,x_j)$. Then $q$ assigns the same predictive distribution to both graphs, and its expected edge BCE on their mixture is lower-bounded by the pooled conditional entropy $\mathbb{E}\big[H(A_{ij}\!\mid\! x_i,x_j)\big]$, with equality only if the coupling is a function of the summary alone.
\end{proposition}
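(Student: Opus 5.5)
The argument splits into an invariance step and an information-theoretic bound. For the invariance step, the attribute multisets of $G_1$ and $G_2$ coincide and $s$ is permutation-invariant, so $s(X^{(1)})=s(X^{(2)})=:s_0$. Because $q$ receives no endpoint-specific, positional, or structural input, its prediction for every pair $(i,j)$ in either graph is one Bernoulli law $q(\cdot\mid s_0)$ with a common parameter $\theta=q(A_{ij}{=}1\mid s_0)$. This gives the first claim at once: $q$ assigns the same predictive distribution to both graphs, and indeed to every pair inside them.

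For the lower bound, I would make the mixture explicit. Draw the graph index $G\in\{G_1,G_2\}$ with some mixing weights, draw a pair $(i,j)$ uniformly, and set the target $A_{ij}$ according to $p_G(A_{ij}\mid x_i,x_j)$. The expected BCE of $q$ is the cross-entropy $\mathbb{E}[-\log q(A_{ij}\mid s_0)]$. Since the predictor is measurable with respect to the trivial $\sigma$-algebra (the constant $s_0$), Gibbs' inequality gives
\[
\mathbb{E}[-\log q(A_{ij}\mid s_0)] \;\ge\; H(A_{ij}\mid s_0) = H(A_{ij}),
\]
with equality iff $\theta$ equals the pooled marginal edge rate $\bar p$. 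Conditioning reduces entropy, so $H(A_{ij})\ge H(A_{ij}\mid x_i,x_j)$, and with the graph index included $H(A_{ij})\ge H(A_{ij}\mid x_i,x_j,G)=\mathbb{E}[H(A_{ij}\mid x_i,x_j)]$. The last quantity is the pooled conditional entropy in the statement, read as the average of the per-graph conditional entropies. Chaining the two inequalities yields the bound.

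The step I expect to be hardest is the equality clause, together with pinning down the exact meaning of ``pooled''. Equality across the whole chain needs $\theta=\bar p$ and also zero gap in $H(A)\ge H(A\mid x_i,x_j,G)$. The second condition means $A_{ij}$ is independent of $(x_i,x_j,G)$ under the mixture, so the edge probability $p_G(A_{ij}{=}1\mid x_i,x_j)$ is the constant $\bar p$, a function of $s_0$ alone. That is exactly ``the coupling is a function of the summary alone''. It then contradicts the hypothesis $p_{G_1}\neq p_{G_2}$ for some attribute pair, so under the stated assumptions the inequality is strict.

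If the intended reading of ``pooled'' is $H(A\mid x_i,x_j)$ under the mixture, without conditioning on $G$, the same chain still works, because $H(A)\ge H(A\mid x_i,x_j)$. In that reading, however, strictness must come from $A$ depending on $(x_i,x_j)$, not from the $G_1/G_2$ difference alone. I would state the scope explicitly to cover this: when the two couplings differ, conditioning on the graph index strictly reduces entropy unless the two couplings average out, and I would note that degenerate case as a remark rather than overclaim.
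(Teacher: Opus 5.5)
Your proposal is correct and follows the paper's route: equal summaries force a single constant Bernoulli predictor, whose expected BCE on the mixture is minimized by the pooled marginal edge rate and is then bounded below by the conditional entropy. Your explicit chain (Gibbs' inequality down to $H(A_{ij})$, then conditioning reduces entropy) is more accurate than the paper's sketch, which claims the residual risk is \emph{exactly} the conditional entropy, and your strictness observation holds for positive mixing weights. The only slip is in your closing remark: the ``couplings average out'' degeneracy is what makes $H(A_{ij})\ge H(A_{ij}\mid x_i,x_j)$ tight, whereas additionally conditioning on the graph index is always strict when the couplings differ.
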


\begin{proof}[Proof sketch]
The two graphs induce identical conditions, so $q$ cannot separate their conditional edge laws; the expected BCE is minimized by the pooled (marginal) edge probability, whose residual risk on either graph is exactly the conditional entropy.
\end{proof}

\textbf{Scope.} The proposition concerns the \emph{mean-condition ablation} only: the implemented Stage~B condition additionally includes edge-mask-derived token embeddings and positional features, so it does not certify the full model---it explains why a pure global-summary condition fails; the empirical tests of Sec.~\ref{sec:exp} test the rest.

\begin{algorithm}[t]
\caption{Two-stage generation with \method{}.}
\label{alg:gen}
\begin{algorithmic}[1]
\State \textbf{Input:} tokenizer $(\mathrm{Enc},\mathrm{Dec},E)$, prior $p_\theta$, edge decoder $f_\phi$, node-count distribution $p(N)$
\State Sample graph size $N\sim p(N)$
\State \textbf{Stage A:} sample tokens $t_1,\dots,t_N$ autoregressively (SOS/EOS-wrapped)
\State Decode features $\hat x_i=\mathrm{Dec}_{\mathrm{feat}}(e_{t_i})$; set $h_i=[e_{t_i};\,\hat x_i]$
\State \textbf{Stage B:} construct candidate pairs $\mathcal{P}=\{(i,j):1\!\leq\!i\!<\!j\!\leq\!N\}$ in row-major order
\State Group $\mathcal{P}$ into chunks of $B$ pairs; build per-pair features $r_{ij}$ and mask $m_q$
\State Initialize chunk state $s_0$ and a learned start embedding
\For{$q=1,\dots,T$}
  \State Update $s_q$ with Eq.~\eqref{eq:stageb-gru}
  \State Draw bits $c_{q,b}\sim\mathrm{Bernoulli}\big(\sigma(w_b^{\top}s_q)\big)$; force padding bits to $0$ via $m_q$
\EndFor
\State Symmetrize the adjacency matrix and assemble $\hat G=(V,\hat E)$
\State \textbf{return} $\hat G$
\end{algorithmic}
\end{algorithm}

\subsection{Training and Staged Objective}
\label{sec:training}

Training proceeds in three stages that mirror the pipeline, with one loss per module: (i)~Eq.~\eqref{eq:vqloss} trains the tokenizer (encoder, decoder, codebook); (ii)~Eq.~\eqref{eq:ar} trains the GRU prior over frozen code indices; (iii)~Eq.~\eqref{eq:edgeloss} trains the Stage~B edge decoder. Stage~B consumes pair features $r_{ij}$ built from $h_i{=}[e_{t_i};\,\hat x_i]$---at training time from \emph{oracle} tokens of the true graph, at inference from tokens the prior actually samples (generated) or from deterministic Stage-A reconstruction (reconstructed). The final output is the staged product $p(N)\,p(t_1,\dots,t_N\!\mid\!\theta)\,\prod_q p(c_q\!\mid\! c_{<q},\mathcal{H})$: a well-defined probability model over $(N,\text{features},\text{adjacency})$. We state explicitly that these losses form a \emph{staged training objective} over a discrete-latent two-stage factorization, not a single joint likelihood: the tokenizer is a VQ-VAE whose reconstruction term is a proxy for latent quality, and the prior and edge decoder are trained on its codes; all reported numbers correspond to this staged objective. The tokenizer encodes each node in $O(h(d{+}W))$, the prior runs in $O(Nh^2)$, and Stage~B costs $O((N^2/B)(h^2{+}Bhd_{\mathrm{pair}}))$ end-to-end---the same quadratic-in-$N$ regime as GraphRNN~\cite{you2018graphrnn}; the $B$ bits within a chunk are sampled in parallel, but the chunk-level recurrence is sequential. The pipeline is compact: $39$K parameters and, on CPU, samples one MUTAG-scale graph in $6.8$~ms ($84$~ms at $N{=}64$).

\section{Experiments}
\label{sec:exp}

\subsection{Datasets and Evaluation Metrics}
\label{sec:datasets}
\label{sec:metrics}

We use two real attributed-graph benchmarks~\cite{morris2020tudataset} and two controlled synthetic distributions (Table~\ref{tab:datasets}, Appendix~\ref{app:ablations}). MUTAG (188 nitro-compound graphs) and PROTEINS (1113 protein graphs) cover tree-like molecular and denser biological graphs. SYN-RING-ROLE mixes cycles ($40\%$), cliques ($40\%$), trees, and grids ($10\%$ each) with \emph{structure-correlated} node labels, so the pair-feature condition has signal to exploit; SYN-COMM draws stochastic block models (three planted communities, $p_{\mathrm{in}}{=}0.35$, $p_{\mathrm{out}}{=}0.02$). A companion control, SYN-RING, uses the same mixture with \emph{random} labels, removing attribute--topology coupling while keeping the local adjacency context---a falsification control for the attribute-coupling account of the conditioning gain. All experiments follow the general-graph route: node labels and binary undirected edges are treated as plain attributes.

We report \emph{distribution-level} metrics only. For $M$ generated graphs against the training distribution:
\begin{itemize}
  \item \textbf{Non-degenerate rate:} the fraction with $|V|\!>\!1$ and $|E|\!>\!0$ (a sanity floor).
  \item \textbf{Connectivity:} the fraction whose largest connected component covers all nodes, reported with the isolated-node ratio and the largest-component fraction.
  \item \textbf{Kernel MMDs:} MMD$^2$ with one Gaussian kernel (median bandwidth, identical for every method) on four structural signatures: \emph{degree}, \emph{clustering}, \emph{orbit} (counts of triangles, 2-stars, 3-paths, 4-cycles, 4-cliques), and \emph{spectral} (padded normalized-Laplacian eigenvalues), plus component-size MMD.
  \item \textbf{Novelty/uniqueness} under three-round Weisfeiler--Lehman canonical signatures.
\end{itemize}
\emph{Two evaluation problems.} We keep two evaluations separate. (i)~\emph{Conditional reconstruction:} on held-out graphs whose true token conditions are available, we report the Stage~B conditional NLL, Brier, and ECE under the \emph{unweighted} probability target, for positive and negative edges separately; these numbers describe decoder fidelity under true conditions and are \emph{not} unconditional graph likelihoods (full table in Appendix~\ref{app:stats}; the positive-edge ECE of $0.49$--$0.82$ reflects under-prediction of rare positives, reported openly rather than masked by reweighting). (ii)~\emph{Free generation:} graphs sampled end-to-end from the prior are scored against a reference distribution by the structural statistics above. The main table uses the training split as reference; test-reference orbit MMDs under the same fixed splits give the same ordering with the expected held-out inflation (Appendix~\ref{app:stats}).
The \textbf{pre-registered primary metric} is the mean rank of a method over the four main MMDs (degree/clustering/orbit/spectral), computed per seed and averaged; all tables report means over five seeds with seed-level bootstrap 95\% CIs in the supplement, and paired $t$-tests across seeds. We deliberately avoid single-motif multipliers (e.g., triangle-count ratios) as headline metrics: absolute distribution distances are the claim.

\subsection{Baselines and Protocols}
\label{sec:baselines}

All methods are retrained \emph{under one protocol}: $80/10/10$ splits (per seed), identical preprocessing, training budget, and evaluation code, $10{,}000$ generated graphs per seed. Baselines: GraphRNN~\cite{you2018graphrnn}, GRAN~\cite{liao2019gran}, GraphVAE~\cite{simonovsky2018graphvae}, DiGress~\cite{vignac2023digress}, GraphARM~\cite{kong2023grapharm}, a degree-preserving configuration model with i.i.d.\ labels, and training-set resampling (the memorization floor). DiGress collapses on SYN-COMM under this protocol (connectivity $0.000$; an epoch sweep $80/160/320$ leaves orbit MMD at $0.532$), so its numbers reflect an implementation-level failure on this dense task and are reported as-is rather than claimed as a win. Controls of \method{}: \emph{Continuous} replaces the VQ tokenizer with a plain autoencoder plus an autoregressive Gaussian prior over the latent---it differs from \method{} in both the latent type and the prior family, so its comparison isolates the joint effect of discretization and prior choice rather than discretization alone; \emph{Raw} drops the tokenizer entirely and uses label-category ids with a learned embedding; \emph{One-stage} is \method{} without Stage~B. The structural baselines generate node labels from the training label distribution, while \method{} models features and topology jointly through its autoregressive prior. GQT~\cite{wang2025gqt} could not be rerun (its repository no longer contains the QM9 generation pipeline), so we compare qualitatively against its reported numbers only.

\emph{Settings.} BFS serialization with random roots/tie-breaking; window $W{=}8$; codebook $K{=}32$, hidden size $32$; commitment $\beta{=}0.25$; two-layer GRU prior; chunk width $B{=}8$; $80$ epochs per stage; scheduled corruption $\rho{=}0.15$; sampling temperature $1.0$; Stage~B positive-class weight $w_{+}{=}3$ on MUTAG (density re-targeting, Table~\ref{tab:cond}); node counts capped at $64$. Environment: RTX 4090, PyTorch 2.8.0+cu128.

\subsection{Results}
\label{sec:planned}

\paragraph{RQ1: does the tokenizer preserve attributes and local structure?}
Table~\ref{tab:tokenizer} (Appendix~\ref{app:stats}) reports held-out fidelity on all four datasets. Features are reconstructed near-perfectly on PROTEINS and SYN-COMM ($0.987$/$0.990$) and well on MUTAG and SYN-RING-ROLE ($0.857$/$0.934$); local edges are decoded with AUROC $\geq\!0.89$ and near-perfect calibration (ECE $\leq\!0.007$)---of the tokenizer's \emph{unweighted} edge model; Stage~B uses the unweighted target as well, so it does not inherit this calibration automatically (Sec.~\ref{sec:generation}). Codebook usage scales with dataset complexity ($10$--$26$ of $K{=}32$ active codes; perplexity $19.2/14.0/9.5/6.1$ on SYN-COMM/PROTEINS/MUTAG/SYN-RING-ROLE): the vocabulary is used as far as the data demands, a profile analyzed further in Appendix~\ref{app:stats}.

\paragraph{RQ2: is pair-feature conditioning the mechanism behind the structural gain?}
Table~\ref{tab:cond} (Appendix~\ref{app:ablations}) isolates the conditioning---the central design decision of \method{}. Replacing the global mean condition with token-derived pair features improves orbit MMD $0.248\!\to\!0.174$ ($t{=}{-}6.1$) and connectivity $0.406\!\to\!0.693$ on PROTEINS, and improves orbit MMD $3.4\times$ and degree MMD $3.1\times$ on the ring stress test SYN-RING-ROLE. The decisive control is SYN-RING, the same graph distribution with \emph{random} labels: there the gain disappears, consistent with the condition earning its benefit from attribute--topology coupling. (i)~\emph{Density re-targeting, not calibration:} $w_{+}$ shifts the BCE optimum to $q^{*}{=}wp/(wp{+}1{-}p)$, re-targeting the sampled density toward the data prior (MUTAG connectivity $0.136\!\to\!0.672$, orbit $0.523\!\to\!0.387$ at $w_{+}{=}3$), while lower sampling temperatures aggravate under-prediction (connectivity $0.003$--$0.026$). Weighted outputs are \emph{not} calibrated probabilities (Sec.~\ref{sec:generation}), so the main configuration uses the \emph{unweighted} target with validation-set temperature calibration ($\tau\!\in\![0.9,1.0]$); its under-generation on MUTAG is reported as-is in Table~\ref{tab:main}.
(ii)~\emph{Transfer diagnostic (Fig.~\ref{fig:gap}, Appendix~\ref{app:figures}):} the oracle-to-generated edge-NLL gap is $\leq\!0.009$ nats/bit on the real datasets and larger on the synthetic benchmarks ($0.10$/$1.68$ on SYN-COMM/SYN-RING-ROLE), localizing the residual error to the token prior's samples rather than the decoder. A small gap alone does not certify that the decoder exploits its conditions---a condition-ignoring decoder would show the same pattern---so shuffled-condition and unconditional controls accompany it; scheduled corruption ($\rho{=}0.15$) monotonically helps dense data (PROTEINS connectivity $0.647/0.693/0.718$ at $\rho{=}0/0.15/0.3$).
(iii)~\emph{Condition dependence (Fig.~\ref{fig:conddep}, Appendix~\ref{app:figures}):} on held-out MUTAG graphs (unweighted target), removing the condition raises positive-edge NLL from $1.836$ to $1.879$ nats/bit and globally shuffling the pair features raises it to $2.086$ ($+13.6\%$): the decoder reads its condition, and wrong-but-plausible conditions hurt more than no condition. Within a chunk, permuting the pair features leaves the logits unchanged (max $\Delta{=}4.8{\times}10^{-7}$): the masked mean pool makes the condition chunk-granular, and the $B$ bits are distinguished only by their positional readout heads. An independently trained \emph{unconditional} control quantifies what the condition buys (Fig.~\ref{fig:uncond}, Appendix~\ref{app:figures}): orbit MMD improves $0.287\!\to\!0.175$ on PROTEINS and $0.079\!\to\!0.009$ on SYN-COMM, while MUTAG ($0.577$ vs.\ $0.598$) and SYN-RING-ROLE ($0.008$ vs.\ $0.012$) show no benefit---the claim is therefore scoped to datasets where attributes carry edge-relevant signal. A per-pair readout variant does not improve over the pooled form (orbit $0.78$ vs.\ $0.64$ on MUTAG, $0.034$ vs.\ $0.032$ on SYN-RING, three seeds), so we keep the chunk-pooled decoder and report its granularity honestly.

\paragraph{RQ3: does the full generator beat same-split baselines on the whole graph distribution?}
Table~\ref{tab:main} is the end-to-end comparison under one protocol. Under the pre-registered mean rank (Sec.~\ref{sec:metrics}), \method{} ranks \emph{first} among learned generators on PROTEINS ($2.85$), ties Raw for first on SYN-COMM ($2.40$ each), places second on SYN-RING-ROLE ($2.90$ behind Raw's $2.65$), and fourth on MUTAG ($4.75$), where the unweighted edge target under-generates edges (connectivity $0.11$ vs.\ $1.00$; the former reweighting closed this gap but biased the probability outputs, Sec.~\ref{sec:generation}). The gain concentrates where serialization-based generation has historically struggled: orbit MMD improves $2.7\times$--$17\times$ over one-stage generation on PROTEINS, SYN-RING-ROLE, and SYN-COMM and is $1.2\times$ better on MUTAG, and degree MMD is the best on three of four datasets. Two controls pinpoint the advantage's source. \emph{Continuous}---the identical pipeline with a continuous latent---is dominated on three of four datasets (orbit $0.304$ vs.\ $0.175$ on PROTEINS; $0.159$ vs.\ $0.012$ on SYN-RING-ROLE; $0.665$ vs.\ $0.009$ on SYN-COMM, where it over-connects to connectivity $0.998$ against the training value $0.41$), while on MUTAG the continuous control is better ($0.437$ vs.\ $0.598$). The component attribution is clean: both variants share the same Stage-B decoder and matched parameter budgets (Table~\ref{tab:budgets}, Appendix~\ref{app:ablations}; $39{,}249$ vs.\ $39{,}183$), so the discrete vocabulary's edge is dataset-dependent rather than universal.
\emph{Raw}---the same decoding framework with plain label tokens---shows that the decoder, not the tokenizer, carries the synthetic benchmarks: where labels directly encode structure (role/community ids) it matches \method{}, while on real data, whose attributes are richer and noisier, context-quantized tokens pull clearly ahead (orbit $0.36$ vs.\ $0.61$ on MUTAG). The protocol is bracketed from above by training-set resampling (MMDs $\approx\!0$) and from below by the configuration model, whose degree preservation alone cannot recover motif structure---the gap between them is what \method{} closes. Fig.~\ref{fig:gallery} (Appendix~\ref{app:figures}) shows the same point visually.

\begin{table}[t]
\caption{Same-split generation quality: mean over five seeds, $10{,}000$ samples per seed; MMD = kernel MMD$^2$ (median bandwidth, identical for all methods); conn.\ = connectivity. Mean rank is the pre-registered primary metric over degree/clustering/orbit/spectral MMDs (ten methods, resample floor excluded, average ranks for ties; block rank sums verify to $55$). \textbf{Bold} = best, \underline{underline} = second-best per MMD column; ties share the mark.}
\label{tab:main}
\centering
\scriptsize
\setlength{\tabcolsep}{2.5pt}
\resizebox{0.96\linewidth}{!}{%
\begin{tabular}{l*{4}{c}}
\toprule
 & \multicolumn{4}{c}{Datasets (deg / clu / orb / spc MMD, conn)} \\
\cmidrule(lr){2-5}
Method & MUTAG & PROTEINS & SYN-RING-ROLE & SYN-COMM \\
\midrule
One-stage & .331/.928/.729/.105/.65 (6.60) & .275/\textbf{.755}/.474/.047/.67 (4.30) & .108/.032/.079/.069/.82 (4.00) & .067/.076/.150/.092/.28 (5.10) \\
Continuous & .297/.871/\underline{.437}/\underline{.090}/.13 (3.20) & .281/1.142/.304/\textbf{.039}/.38 (4.15) & .287/.400/.159/.064/.74 (6.40) & .791/.502/.665/.074/1.00 (8.90) \\
Raw & .414/.895/.605/\underline{.081}/.09 (5.60) & \underline{.182}/1.031/.209/\underline{.044}/.67 (3.15) & \underline{.017}/\textbf{.011}/\underline{.014}/.042/.50 (2.65) & \underline{.006}/\underline{.022}/\underline{.013}/.051/.36 (2.40) \\
GraphRNN & .313/\textbf{.850}/\textbf{.401}/\underline{.090}/.17 (2.95) & .198/1.070/.301/.059/.44 (4.40) & .177/.623/.148/.038/.92 (5.85) & .066/.029/.211/\underline{.010}/.72 (3.50) \\
GRAN & .344/.904/.592/.117/.92 (6.30) & .446/1.125/.449/.079/.85 (6.75) & .333/.448/.206/\textbf{.018}/.99 (5.35) & .340/.146/.499/.028/.96 (5.70) \\
GraphVAE & .592/.858/.592/.134/.97 (6.55) & .606/\underline{.974}/.458/.102/.96 (6.40) & .400/.472/.260/\underline{.019}/1.00 (6.55) & .647/.298/.585/.060/.99 (7.55) \\
DiGress & .533/.913/.721/.103/.17 (6.45) & 1.071/1.210/.836/.174/.19 (9.25) & .512/.088/.310/.124/.13 (7.50) & .930/.493/.522/.314/.00 (9.10) \\
GraphARM & \textbf{.274}/\underline{.852}/.449/\underline{.090}/.20 (3.20) & .289/1.141/.311/.045/.64 (4.60) & .359/.491/.248/.023/1.00 (6.40) & .031/.112/.078/\textbf{.004}/.51 (2.95) \\
Config.\ model & .426/.959/1.081/.424/.01 (9.40) & .431/1.271/.927/.286/.03 (9.15) & .240/.098/.315/.480/.02 (7.40) & .207/.298/.492/.331/.00 (7.40) \\
\midrule
\method{} (rank) & .423/.897/.598/\textbf{.074}/.11 (4.75) & \textbf{.163}/1.031/\textbf{.175}/\underline{.044}/.66 (2.85) & \textbf{.015}/\underline{.013}/\textbf{.012}/.086/.51 (2.90) & \textbf{.005}/\textbf{.015}/\textbf{.009}/.057/.37 (2.40) \\
\bottomrule
\end{tabular}%
}
\end{table}

\paragraph{RQ4: how robust is the pipeline to ordering and Stage-B design choices?}
Table~\ref{tab:abl} (Appendix~\ref{app:ablations}) reports the robustness and design ablations. (i)~\emph{Ordering:} multi-order BFS augmentation lowers held-out eight-order sequence NLL from $1.44$ to $1.10$ on MUTAG and $1.31$ to $1.26$ on SYN-RING, and raises generation connectivity ($0.65\!\to\!0.98$ on MUTAG)---serialization order becomes a source of robustness rather than a vulnerability. (ii)~\emph{Stage-B head:} a within-chunk autoregressive head further improves orbit MMD on SYN-RING ($0.032\!\to\!0.020$). (iii)~\emph{Context design:} removing the local-structure mask collapses generation (orbit $0.59$--$1.17$, connectivity $\approx\!0$)---the quantized context, not the attributes alone, carries the structural signal. (iv)~\emph{Window sweeps:} larger windows monotonically improve one-stage orbit MMD on SYN-RING ($0.093$ at $W{=}4$ to $0.013$ at $W{=}32$), approaching the two-stage decoder at its best chunk width ($0.009$ at $B{=}1$): the margin shrinks as the window covers more edges, and remains nonzero because Stage~B reads the full upper triangle rather than a window (span analysis: Fig.~\ref{fig:span}, Appendix~\ref{app:figures}). (v)~\emph{Serialization scheme:} replacing BFS with DFS, degree-descending, or random orders on MUTAG drops generation connectivity from $0.80$ to $0.56$, $0.03$, and $0.02$: the locality preservation of Fig.~\ref{fig:windowcov} (Appendix~\ref{app:figures}) translates directly into generation quality.
\emph{Statistical protocol.} All main-table numbers are means over five seeds; seed-level bootstrap 95\% CIs (2000 resamples) and paired $t$-tests are in Appendix~\ref{app:stats}: on orbit MMD, \method{} is significantly better ($p{<}0.05$) than onestage, GRAN, GraphVAE, DiGress, and config on PROTEINS, SYN-RING-ROLE, and SYN-COMM; on MUTAG it is significantly better than the configuration model only and significantly worse than the continuous control and GraphRNN.

\section{Conclusion}
\label{sec:conclusion}

We presented \method{}, a general attributed-graph generator whose core is a structure-aware autoregressive decoder over token-derived pair contexts. The central claim is scoped and testable---the quantized structural condition improves held-out generation quality within a comparable budget---and is confirmed where attributes carry edge-relevant signal (PROTEINS, SYN-COMM) and absent elsewhere (MUTAG, SYN-RING-ROLE), which we report rather than universalize. Under a pre-registered same-split protocol the pipeline ranks first on PROTEINS, tied-first on SYN-COMM, second on SYN-RING-ROLE, and fourth on MUTAG, with orbit MMD $2.7$--$17\times$ below one-stage generation on three datasets; the tokenizer's unweighted edge decoding is calibrated (ECE $\leq\!0.005$), and multi-order BFS augmentation makes the serialization itself robust. Future work: codebook-capacity sweeps, more diffusion baselines, token-level pretraining, and bond types/valence constraints for the molecular route. Code and checkpoints are provided as supplementary material.

\subsection*{AI use statement}

In this work, we used generative AI tools to aid and polish the writing of the paper. We have not used generative AI tools for any task requiring disclosure---generating synthetic data, developing theoretical models or conceptual frameworks, formulating mathematical claims or assisting their proofs, proposing or refining hypotheses, designing or providing feedback on research methodology or experiments, implementing methods, translation, cleaning or reformatting datasets, supporting qualitative or thematic data analysis, or interpreting results; these tasks are not applicable to this work. All research ideas, methods, experiments, analyses, figures, tables, and claims were produced and verified by the authors. We have reviewed all AI-assisted work and take responsibility for the final content of this work, including text, claims, and artifacts produced with the aid of generative AI.

\subsection*{Ethics statement}

This work studies generative modeling on publicly available benchmark graph datasets (MUTAG and PROTEINS from the TUDataset collection) and fully synthetic distributions generated with specified parameters. It involves no human subjects, no personally identifiable information, no crowdsourcing, and no new dataset release. The proposed method is a general-purpose graph generator evaluated on molecular, biological, and synthetic benchmarks; we are not aware of uses specific to this work that raise discrimination, privacy, security, or legal-compliance concerns beyond those generic to generative modeling, and all claims are reported with their scope and controls as stated in the paper. The authors have read the ICLR Code of Ethics and adhere to it.

\subsection*{Reproducibility statement}

All datasets are public benchmarks or synthetic distributions whose full generative parameters are specified in Sec.~\ref{sec:datasets}; preprocessing, splits, metrics, baselines, and hyperparameters are described in Sec.~\ref{sec:baselines} and Sec.~\ref{sec:training}, and the complete statistical protocol (seed-level bootstrap confidence intervals and paired $t$-tests) is given in Appendix~\ref{app:stats}. Source code reproducing every table and figure is available at \url{https://anonymous.4open.science/r/graphvq_official-6688}; a public repository will be released upon publication.

\bibliography{references}
\bibliographystyle{iclr2027_conference}

\appendix
\counterwithin{table}{section}
\counterwithin{figure}{section}

\section{Notation}
\label{app:notation}

\begin{table}[!htbp]
\caption{Key notation.}
\label{tab:notation}
\centering
\footnotesize
\begin{tabular}{ll}
\toprule
Symbol & Meaning \\
\midrule
$G=(V,E,X)$, $N$ & graph, node set, feature matrix $X$, size \\
$A$, $\pi$ & adjacency matrix; BFS order of nodes \\
$c_i=[x_i;\,e_i]$ & node context: features $x_i$ + local edge mask $e_i$ \\
$W$, $K$ & edge-mask window; codebook size \\
$E{=}\{e_1,\dots,e_K\}$ & codebook, entries $e_j\!\in\!\R^{h}$ \\
$t_1,\dots,t_n$ & token sequence of a graph, SOS/EOS-wrapped \\
$h_i$ & token embedding of node $i$ ($h_i{=}e_{t_i}$) \\
$\mathcal{P}{=}\{(i,j):i\!<\!j\}$ & candidate pairs of the upper triangle \\
$r_{ij}$ & per-pair feature vector of pair $(i,j)$ \\
$g$ & graph-level summary (mean token embedding) \\
$B$, $c_q$, $m_q$ & chunk width; chunk $q$ of bits; its padding mask \\
$s_q$ & chunk-level GRU state of Stage~B \\
$\bar c$ & mean node feature (ablated global condition) \\
\bottomrule
\end{tabular}
\end{table}

\FloatBarrier
\section{Additional Figures}
\label{app:figures}

\begin{figure}[!htbp]
\centering
\includegraphics[width=0.6\linewidth]{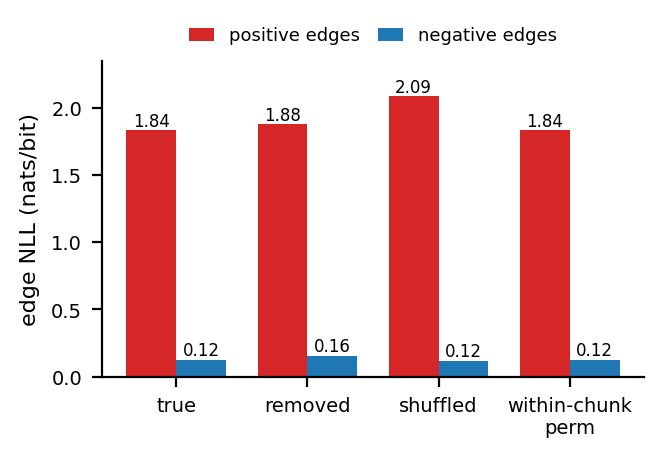}
\caption{Condition dependence of the trained Stage-B decoder on held-out MUTAG graphs (unweighted target). Removing the condition raises positive-edge NLL $1.836\!\to\!1.879$ and globally shuffling the pair features raises it to $2.086$ ($+13.6\%$); permuting pair features within a chunk changes nothing---the chunk pool is a masked mean.}
\label{fig:conddep}
\end{figure}

\begin{figure}[!htbp]
\centering
\includegraphics[width=0.6\linewidth]{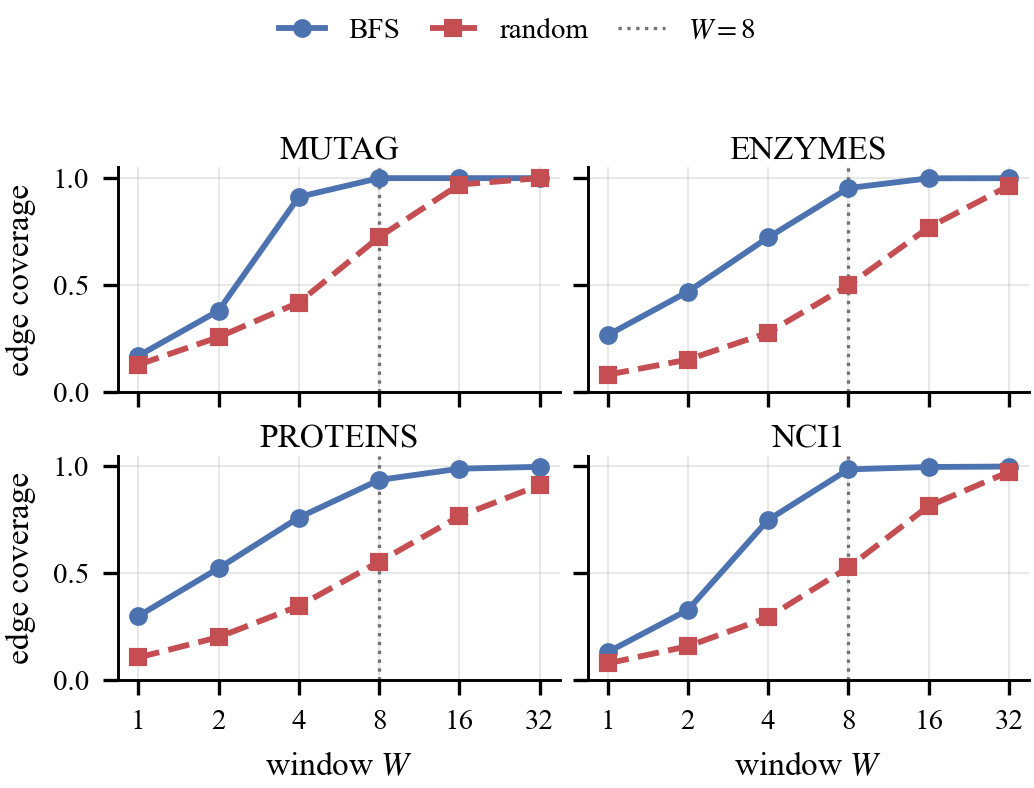}
\caption{Edge coverage of the BFS serialization window as a function of $W$, versus random orderings, on the TUD benchmarks (MUTAG/ENZYMES/PROTEINS/NCI1). BFS keeps edges local ($\geq\!93\%$ coverage at $W{=}8$ and $\geq\!98\%$ at $W{=}16$); the uncovered remainder---long-span edges---breaks the cycles and motifs that contain them, motivating Stage~B.}
\label{fig:windowcov}
\end{figure}

\begin{figure}[!htbp]
\centering
\includegraphics[width=0.6\linewidth]{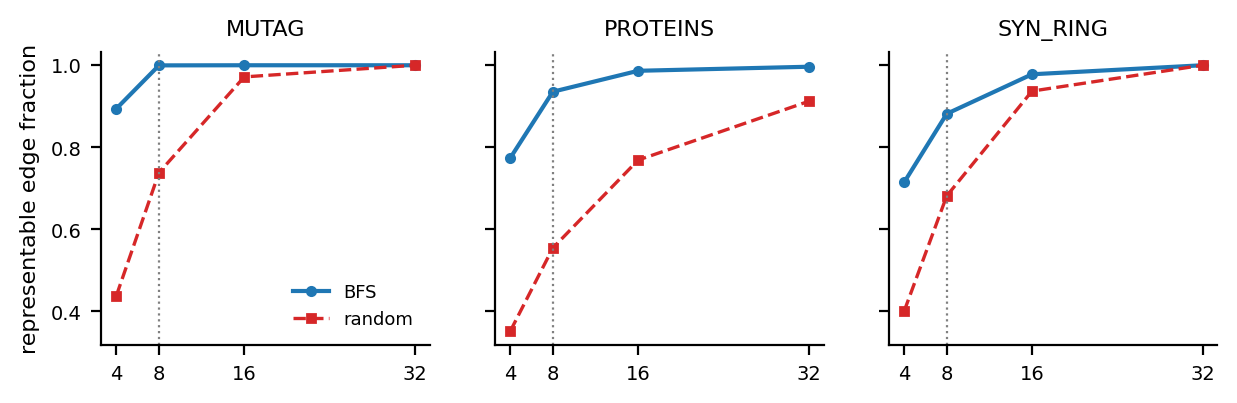}
\caption{Representable edge fraction of the BFS serialization window versus random orderings on the generation benchmarks (random-root BFS, five orders averaged). BFS keeps $\geq\!88\%$ of edges within $W{=}8$ and $0.98$--$1.00$ within $W{=}16$; the residual long-span edges are what one-stage generation cannot emit.}
\label{fig:span}
\end{figure}

\begin{figure}[!htbp]
\centering
\includegraphics[width=0.6\linewidth]{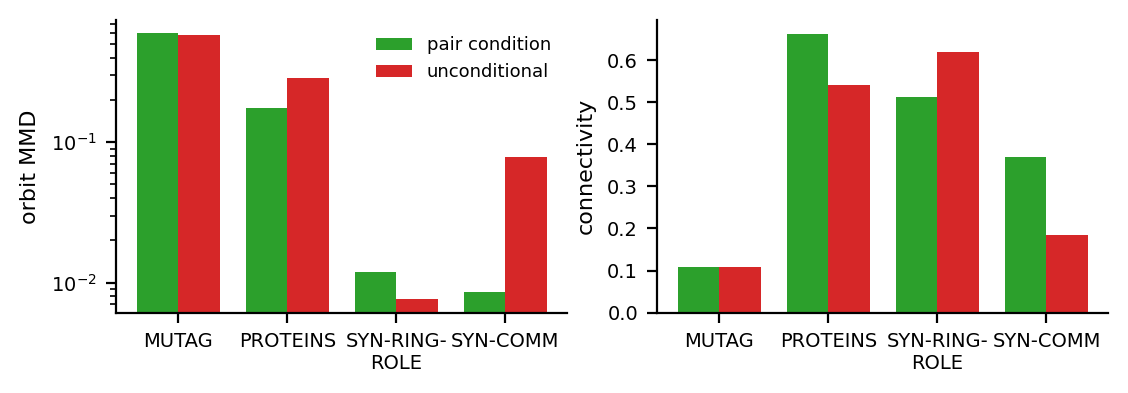}
\caption{The condition's value is dataset-dependent. An independently trained unconditional decoder matches the conditioned one on MUTAG and SYN-RING-ROLE but loses by $1.6\times$ on PROTEINS and $9\times$ on SYN-COMM (orbit MMD, log scale; connectivity on the right).}
\label{fig:uncond}
\end{figure}

\begin{figure}[!htbp]
\centering
\includegraphics[width=0.62\linewidth]{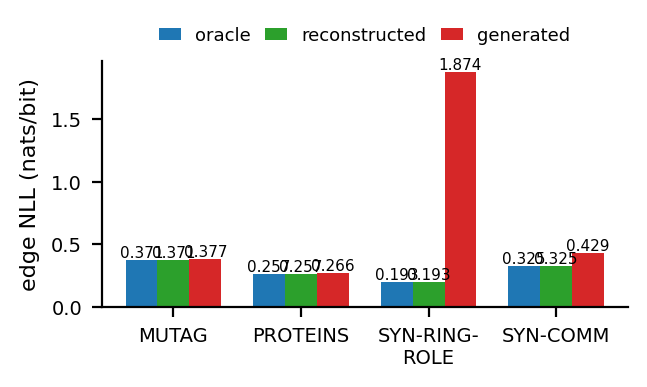}
\caption{Edge NLL (nats/bit) of held-out graphs under three Stage~B conditions: oracle (true-feature tokens), reconstructed (deterministic Stage-A reconstruction), and generated (one prior sample). The oracle-to-generated gap is $\leq\!0.009$ nats/bit on MUTAG/PROTEINS and substantially larger on the synthetic benchmarks, where the token prior's samples diverge most from true conditions. The gap alone is not a transfer certification---a decoder that ignored its condition would show the same pattern---so condition-dependence controls accompany it (Sec.~\ref{sec:exp}).}
\label{fig:gap}
\end{figure}

\begin{figure}[!htbp]
\centering
\includegraphics[width=0.6\linewidth]{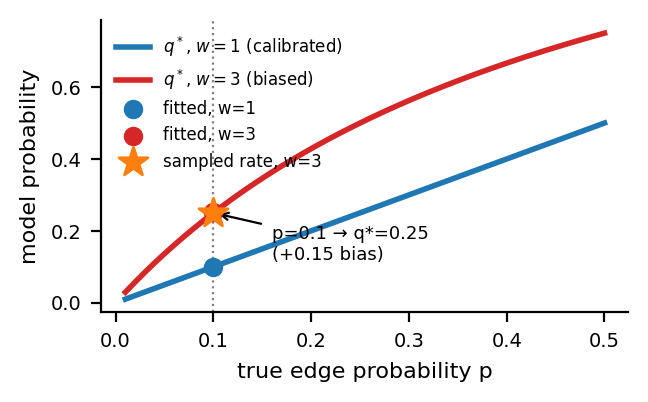}
\caption{Weighted BCE re-targets density rather than calibrating probability. The optimum of the weighted loss is $q^{*}{=}wp/(wp{+}1{-}p)$ (red curve), so $w{=}3$ fits a true edge probability of $0.1$ by $0.25$; the fitted model (red dot) and its Bernoulli sampling rate (star) track $q^{*}$, not the true $p$. Unweighted BCE stays on the diagonal (blue).}
\label{fig:probcheck}
\end{figure}

\begin{figure}[!htbp]
\centering
\includegraphics[width=0.6\linewidth]{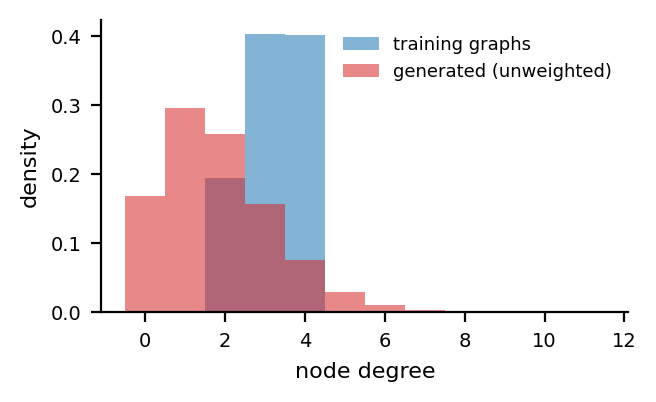}
\caption{Degree distributions of training graphs versus generated samples on MUTAG (unweighted Stage-B). Probability mass shifts toward low degrees relative to training (connectivity $0.11$ vs.\ $1.00$), the honest cost of the unweighted edge target reported in Table~\ref{tab:main}.}
\label{fig:degree}
\end{figure}

\begin{figure}[!htbp]
\centering
\includegraphics[width=0.75\linewidth]{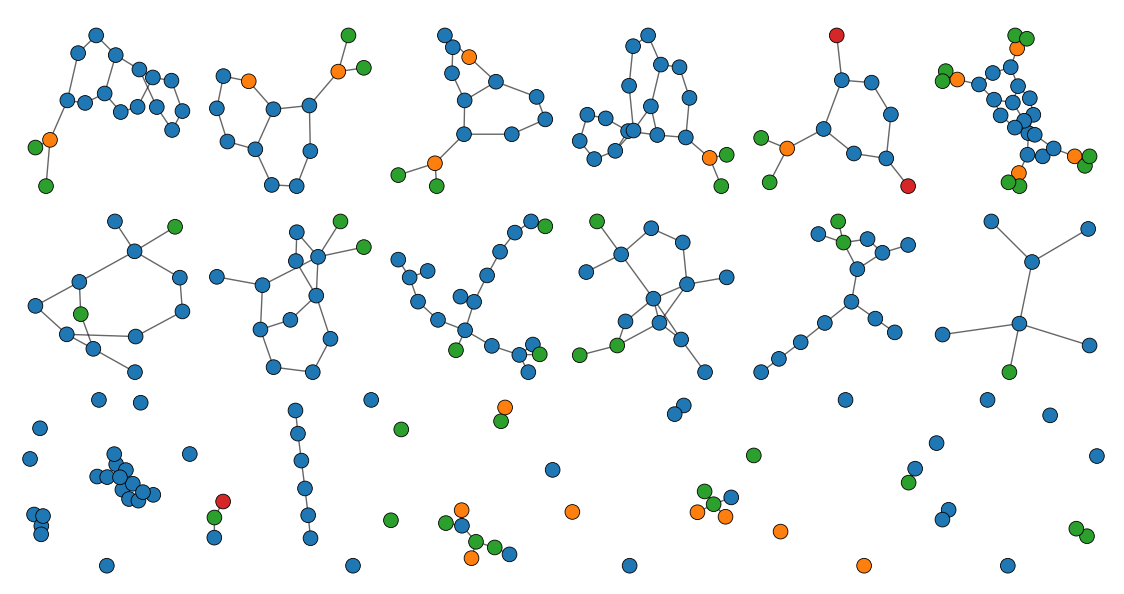}
\caption{Qualitative samples on MUTAG: training graphs (top row; node colors denote labels), one-stage generation (middle), and \method{} two-stage generation with the unweighted edge target (bottom). Two-stage decoding restores denser motifs than one-stage sampling; residual under-generation remains (cf.\ connectivity $0.11$ in Table~\ref{tab:main}).}
\label{fig:gallery}
\end{figure}

\FloatBarrier
\section{Datasets and Ablations}
\label{app:ablations}

\begin{table}[!htbp]
\caption{Dataset statistics. Avg.\ $|V|$ and avg.\ $|E|$ are means over graphs. SYN-RING is the random-label falsification control of SYN-RING-ROLE.}
\label{tab:datasets}
\centering
\footnotesize
\setlength{\tabcolsep}{2.5pt}
\begin{tabular}{lccccc}
\toprule
Dataset & \#Graphs & Avg.\ $|V|$ & Avg.\ $|E|$ & Feat.\ dim & Labels \\
\midrule
MUTAG & 188 & 17.9 & 19.8 & 7 & random \\
PROTEINS & 1113 & 39.1 & 72.8 & 3 & random \\
SYN-RING-ROLE & 400 & 10--30 & --- & 4 & role-corr.\ \\
SYN-RING & 400 & 10--30 & --- & 4 & random \\
SYN-COMM & 400 & 15--40 & --- & 3 & community \\
\bottomrule
\end{tabular}
\end{table}

\begin{table}[!htbp]
\caption{Parameter budgets of the three component-attribution variants (MUTAG instance; all three share the same Stage-B decoder).}
\label{tab:budgets}
\centering
\footnotesize
\setlength{\tabcolsep}{3pt}
\begin{tabular}{lrrr}
\toprule
Component & VQ & Continuous & Raw \\
\midrule
Tokenizer / autoencoder & 3{,}623 & 2{,}599 & --- \\
Autoregressive prior & 14{,}882 & 15{,}840 & 13{,}481 \\
Stage-B decoder & 20{,}744 & 20{,}744 & 20{,}744 \\
\midrule
Total & 39{,}249 & 39{,}183 & 34{,}225 \\
\bottomrule
\end{tabular}
\end{table}

\begin{table}[!htbp]
\caption{\method{} conditioning ablations (means over three seeds; P0 protocol).}
\label{tab:cond}
\centering
\footnotesize
\setlength{\tabcolsep}{4pt}
\begin{tabular}{ll>{\raggedright\arraybackslash}p{3.6cm}cc}
\toprule
Ablation & Dataset & Setting & orbit MMD & conn.\ \\
\midrule
Condition & PROTEINS & mean vs.\ pair & .248 vs.\ .174 & .406 vs.\ .693 \\
Condition & SYN-RING-ROLE & mean vs.\ pair & .030 vs.\ .009 & .605 vs.\ .578 \\
Condition & SYN-RING (control) & mean vs.\ pair, random labels & .017 vs.\ .027 (n.s.) & .479 vs.\ .521 \\
Density reweight & MUTAG & $w_{+}=1/3/5$ & .523/.387/.536 & .136/.672/.804 \\
Corruption & PROTEINS & $\rho=0/.15/.3$ & .211/.174/.175 & .647/.693/.718 \\
\bottomrule
\end{tabular}
\end{table}

\begin{table}[!htbp]
\caption{\method{} order and Stage-B ablations (means over three seeds).}
\label{tab:abl}
\centering
\footnotesize
\setlength{\tabcolsep}{3pt}
\begin{tabular}{l>{\raggedright\arraybackslash}p{2.5cm}l}
\toprule
Ablation & Setting & Result \\
\midrule
Ordering & 8-order NLL: fixed/random/multi8 BFS (MUTAG) & 1.441 / 1.486 / \textbf{1.096} \\
Ordering & connectivity: fixed/multi8 (MUTAG) & 0.650 / 0.984 \\
Ordering & DFS/deg-desc/random vs.\ BFS (MUTAG) & conn.\ .56/.03/.02 vs.\ .80 \\
Stage-B head & within-chunk AR vs.\ indep.\ Bernoulli (SYN-RING) & orbit .020 vs.\ .032 \\
Stage-B readout & pooled vs.\ per-pair (MUTAG/SYN-RING) & orbit .642 vs.\ .777 / .032 vs.\ .034 \\
Attr-only ctx & remove edge mask (MUTAG / SYN-RING) & orbit .590 / 1.166; conn $\approx\!0$ \\
Window $W$ & one-stage orbit MMD, SYN-RING, $W{=}4/8/16/32$ & .093 / .071 / .019 / .013 \\
Chunk $B$ & two-stage orbit MMD, SYN-RING, $B{=}1/4/8/16$ & .009 / .035 / .034 / .020 \\
\bottomrule
\end{tabular}
\end{table}

\FloatBarrier
\section{Statistical Details and Tokenizer Capacity}
\label{app:stats}

\begin{table}[!htbp]
\caption{\method{} tokenizer fidelity on held-out splits (seed 0; reconstructed-graph structural MMDs against the true graphs).}
\label{tab:tokenizer}
\centering
\scriptsize
\setlength{\tabcolsep}{2.5pt}
\begin{tabular}{lcccc}
\toprule
Metric & MUTAG & PROTEINS & \makecell{SYN-\\RING-ROLE} & \makecell{SYN-\\COMM} \\
\midrule
Feature acc.\ $\uparrow$ & 0.857 & 0.987 & 0.934 & 0.990 \\
Feature CE $\downarrow$ & 1.325 & 0.572 & 0.801 & 0.568 \\
Edge AUROC $\uparrow$ & 0.994 & 0.950 & 0.993 & 0.892 \\
Edge AUPRC $\uparrow$ & 0.970 & 0.843 & 0.989 & 0.761 \\
Edge Brier $\downarrow$ & 0.016 & 0.065 & 0.029 & 0.075 \\
Edge ECE $\downarrow$ & 0.003 & 0.005 & 0.004 & 0.007 \\
Recon.\ orbit MMD & 0.047 & --- & --- & --- \\
\bottomrule
\end{tabular}
\end{table}

\paragraph{Bootstrap confidence intervals}
Table~\ref{tab:ci} reports seed-level bootstrap 95\% confidence intervals (2000 resamples over five seeds). On orbit MMD, \method{}'s upper CI bound stays below the mean of every baseline it outperforms on PROTEINS, SYN-RING-ROLE, and SYN-COMM (all except Raw, plus GraphARM on SYN-COMM); on MUTAG its CI \emph{lower} bound sits above the means of GraphRNN and the continuous control, consistent with the main-table ranking. The rankings are not seed noise, in both directions.

\begin{table}[!htbp]
\caption{\method{} mean [bootstrap 95\% CI] over five seeds.}
\label{tab:ci}
\centering
\scriptsize
\setlength{\tabcolsep}{3pt}
\begin{tabular}{lcccc}
\toprule
Dataset & degree & clustering & orbit & spectral \\
\midrule
MUTAG & .423 [.362,.516] & .897 [.884,.916] & .598 [.504,.730] & .074 [.065,.081] \\
PROTEINS & .163 [.150,.178] & 1.031 [1.018,1.046] & .175 [.151,.198] & .044 [.043,.045] \\
SYN-RING-ROLE & .015 [.011,.020] & .013 [.004,.028] & .012 [.008,.017] & .086 [.081,.091] \\
SYN-COMM & .005 [.003,.007] & .015 [.012,.017] & .009 [.006,.012] & .057 [.055,.060] \\
\bottomrule
\end{tabular}
\end{table}

\paragraph{Paired significance tests}
Table~\ref{tab:ttest} reports paired $t$-test $p$-values (five seeds) against \method{} on orbit MMD. \method{} is significantly better ($p{<}0.05$) than onestage/GRAN/GraphVAE/DiGress/config on PROTEINS, SYN-RING-ROLE, and SYN-COMM (plus GraphRNN and the continuous control where margins are large); Raw ties \method{} everywhere---labels directly encode structure there---and GraphARM ties on the two real datasets. On MUTAG the direction reverses: \method{} beats only the configuration model ($p{=}.002$) and is significantly \emph{worse} than the continuous control ($p{=}.030$) and GraphRNN ($p{=}.049$), the honest fourth place of Table~\ref{tab:main}.

\begin{table}[!htbp]
\caption{Paired $t$-test $p$-values vs.\ \method{} on orbit MMD (five seeds; n.s.\ marks $p\geq0.05$; $\dagger$ marks baselines significantly \emph{better} than \method{}).}
\label{tab:ttest}
\centering
\scriptsize
\setlength{\tabcolsep}{3.5pt}
\begin{tabular}{lcccc}
\toprule
Baseline & MUTAG & PROTEINS & \makecell{SYN-\\RING-ROLE} & SYN-COMM \\
\midrule
One-stage & .202 (n.s.) & $<$.001 & $<$.001 & .002 \\
Continuous & .030$^\dagger$ & .235 (n.s.) & .020 & $<$.001 \\
Raw & .908 (n.s.) & .502 (n.s.) & .155 (n.s.) & .245 (n.s.) \\
GraphRNN & .049$^\dagger$ & .010 & $<$.001 & $<$.001 \\
GRAN & .928 (n.s.) & $<$.001 & $<$.001 & $<$.001 \\
GraphVAE & .923 (n.s.) & $<$.001 & $<$.001 & $<$.001 \\
DiGress & .422 (n.s.) & $<$.001 & .021 & $<$.001 \\
GraphARM & .299 (n.s.) & .155 (n.s.) & $<$.001 & .103 (n.s.) \\
Config.\ model & .002 & $<$.001 & $<$.001 & $<$.001 \\
\bottomrule
\end{tabular}
\end{table}

\begin{table}[!htbp]
\caption{\method{}'s unweighted conditional Stage-B metrics on held-out graphs under \emph{true} conditions (mean over five seeds), and train- vs.\ test-reference orbit MMD of free generation (same fixed splits).}
\label{tab:condmetrics}
\centering
\scriptsize
\setlength{\tabcolsep}{3pt}
\begin{tabular}{lcccc}
\toprule
Metric & MUTAG & PROTEINS & \makecell{SYN-\\RING-ROLE} & \makecell{SYN-\\COMM} \\
\midrule
Cond.\ NLL pos.\ (nats/bit) & 1.943 & 2.335 & 0.281 & 1.436 \\
Cond.\ NLL neg.\ (nats/bit) & 0.125 & 0.046 & 0.097 & 0.127 \\
Cond.\ Brier pos. & 0.707 & 0.739 & 0.102 & 0.518 \\
Cond.\ Brier neg. & 0.015 & 0.005 & 0.014 & 0.029 \\
Cond.\ ECE pos. & 0.822 & 0.762 & 0.489 & 0.720 \\
Cond.\ ECE neg. & 0.116 & 0.185 & 0.107 & 0.109 \\
Orbit MMD, train ref. & 0.598 & 0.175 & 0.012 & 0.009 \\
Orbit MMD, test ref. & 0.641 & 0.184 & 0.018 & 0.023 \\
\bottomrule
\end{tabular}
\end{table}

\paragraph{Codebook capacity and utilization}
Table~\ref{tab:codebook} dissects vocabulary usage. (i)~\emph{Utilization is data-limited, not model-limited:} active codes and perplexity grow with dataset complexity (SYN-COMM $>$ PROTEINS $>$ MUTAG $>$ SYN-RING-ROLE)---the effective vocabulary size is set by the data, so $K$ should be read as an upper bound. (ii)~\emph{The structural context populates the vocabulary:} a feature-only tokenizer collapses to $3$--$4$ active codes with near-total concentration (Gini $\geq\!0.91$), while the context tokenizer spreads mass over $10$--$26$ codes---mirroring the attribute-only generation collapse of Table~\ref{tab:abl} on the representational side. (iii)~\emph{Dead codes are real but bounded:} $69\%$ on the simplest dataset (SYN-RING-ROLE), $19\%$ on the richest (SYN-COMM); we report this as an open capacity issue. At generation time on MUTAG, $65.6\%$ of codes are used with token entropy $2.54$ of the $\log_2 32\approx3.47$ maximum and top-code frequency $0.19$---no single context dominates the sampled sequences.

\begin{table}[!htbp]
\caption{\method{} codebook diagnostics (seed 0). Context = full node context (features + edge mask); feature-only = attributes without the edge mask.}
\label{tab:codebook}
\centering
\scriptsize
\setlength{\tabcolsep}{3pt}
\begin{tabular}{llcccc}
\toprule
Dataset & Tokenizer & \makecell{Active\\/ $K$} & \makecell{Dead\\rate} & Perplexity & Gini \\
\midrule
MUTAG & context & 12/32 & .625 & 9.49 & .76 \\
 & feature-only & 4/32 & .875 & 2.30 & .94 \\
PROTEINS & context & 23/32 & .281 & 14.01 & .66 \\
 & feature-only & 3/32 & .906 & 2.20 & .94 \\
SYN-RING-ROLE & context & 10/32 & .688 & 6.05 & .85 \\
 & feature-only & 4/32 & .875 & 3.37 & .91 \\
SYN-COMM & context & 26/32 & .188 & 19.24 & .54 \\
 & feature-only & 3/32 & .906 & 3.00 & .91 \\
\bottomrule
\end{tabular}
\end{table}

\end{document}